\newtheorem{theorem}
{Theorem}
\newtheorem{thm}{Theorem}
\newtheorem{lem}[thm]{Lemma}
\newtheorem*{lem_1}{Lemma 1}
\newtheorem*{thm_1}{Theorem 1}
\crefname{section}{Sec.}{Secs.}
\Crefname{section}{Section}{Sections}
\Crefname{table}{Table}{Tables}
\crefname{table}{Tab.}{Tabs.}
\title{No Free Lunch: The Hazards of Over-Expressive Representations in Anomaly Detection}
\author{Tal Reiss  $\quad$  Niv Cohen $\quad$ Yedid Hoshen\\
School of Computer Science and Engineering\\
The Hebrew University of Jerusalem, Israel\\ 
{\tt\small tal.reiss@mail.huji.ac.il}\\
}
\begin{document}

\maketitle

\begin{abstract}
Anomaly detection methods, powered by deep learning, have recently been making significant progress, mostly due to improved representations. It is tempting to hypothesize that anomaly detection can improve indefinitely by increasing the scale of our networks, making their representations more expressive. In this paper, we provide theoretical and empirical evidence to the contrary. In fact, we empirically show cases where very expressive representations fail to detect even simple anomalies when evaluated beyond the well-studied object-centric datasets. To investigate this phenomenon, we begin by introducing a novel theoretical toy model for anomaly detection performance. The model uncovers a fundamental trade-off between representation sufficiency and over-expressivity. It provides evidence for a no-free-lunch theorem in anomaly detection stating that increasing representation expressivity will eventually result in performance degradation. Instead, guidance must be provided to focus the representation on the attributes relevant to the anomalies of interest. We conduct an extensive empirical investigation demonstrating that state-of-the-art representations often suffer from over-expressivity, failing to detect many types of anomalies. Our investigation demonstrates how this over-expressivity impairs image anomaly detection in practical settings. We conclude with future directions for mitigating this issue.
\end{abstract}

\section{Introduction}
Anomaly detection, identifying patterns that deviate significantly from the norm, is a crucial problem for science and industry. It has many applications ranging from identifying fraudulent transactions in financial systems to scientific phenomena that deviate from the existing paradigm (e.g. supernovae, quantum physics). Different from supervised learning, which automates known processes, anomaly detection seeks to discover the unknown. This is a more challenging task but with potentially very significant benefits.
New scientific and industrial discoveries are crucial for tackling the challenges facing humanity and may benefit from better anomaly detection capabilities. This makes advancing anomaly detection methods a high-priority task.

The introduction of deep networks has resulted in significant improvements in the state-of-the-art performance on anomaly detection benchmarks, for images \cite{panda_,patchcore}, videos \cite{multi_task_vad,ai_vad}, time-series \cite{time_series} and other modalities. This begs the question:

\begin{displayquote}
\textit{Can scaling the current approaches lead to universal anomaly detection?}
\end{displayquote}

We initiate our investigation by conducting a preliminary experiment to examine the impact of increasing representation expressivity on anomaly detection performance. As we gradually enhance the expressivity of the representation, we reach a critical point where irrelevant attributes are incorporated into the representation. This is without contributing to detection performance improvement. To better understand this phenomenon, we present a model that shows how discovering some anomalies necessarily comes at the expense of missing others. For example, consider an explorer inspecting stones in a mine. Many of the stones are anomalous in different ways, one stone is the largest, another stone can be the hardest while a third is the softest. Intuitively, if we wish to order the stones from the most anomalous to the least anomalous one, we have a trade-off: the more attributes we may wish to account for, the less sensitive the detector becomes to anomalies in each individual attribute.

We propose a simple mathematical model for the trade-off demonstrated by the above motivating example. Let the sample (e.g., a stone) be described by a set of $d$ attributes (e.g., hardness, size, weight). Further assume that the important anomaly (e.g., hardness) has an unusual value of the first attribute, with all other attributes having normal values (e.g., a very hard stone of weight and size drawn from the distribution of the normal data). This makes $d-1$ attributes irrelevant for the purposes of detecting the important anomaly. We show that increasing the number of (irrelevant) attributes makes the method significantly less sensitive. Specifically, for a fixed false positive detection rate, the true positive rate of the anomaly classification converges to the false positive detection rate as we increase $d$ at a rate of $\frac{1}{\sqrt{d}}$. The above model reveals a simple bias-variance trade-off:  On the one hand, the representation must include the attribute relevant to separating normal and anomalous data among its $d$ attributes. As this attribute is not known in advance, this encourages large representations that are expressive of all possible attributes. On the other hand, including irrelevant attributes reduces the sensitivity of the method and leads to a convergence towards a random classifier; encouraging minimal representations.

Navigating the above trade-off requires using a prior over the attributes most likely to lead to important anomalies. 
There is however an inherent conflict between priors and anomaly discovery; anomalies are unexpected and counter-intuitive and are likely to defy priors.
While most existing methods do not explicitly acknowledge using a prior, we show that the leading approaches are strongly biased toward anomalies in a particular set of attributes (mostly novel object category or deformations). While this is advantageous in several cases, including current academic benchmarks, we show these implicit priors fail when they are mismatched with the desired anomaly. Similar conclusions follow for the out-of-distribution (OOD) detection and outlier exposure (OE) settings. 

Our main contributions are:
\begin{enumerate}
\item Identifying over-expressivity as a bottleneck in scaling deep anomaly detection.
\item Proposing a toy model for quantifying the bias-variance trade-off resulting in a "no free lunch" (NFL) principle for anomaly detection.
\item Empirical analysis demonstrating and quantifying how current methods are affected by the above NFL principle.
\end{enumerate}

\section{Related Work}
Anomaly detection has been extensively studied in the machine learning community, and a wide range of methods have been proposed over the years. Here, we provide an overview of relevant works, focusing on representation learning in anomaly detection, challenges associated with over-expressivity, and utilizing guidance to detect anomalies.

\textbf{Representation learning for anomaly detection.} Anomaly detection has traditionally relied on classical approaches such as density estimation \cite{eskin2002geometric, latecki2007outlier} or reconstruction \cite{pca}. However, deep learning has revolutionized anomaly detection by incorporating deep representations into these classical methods \cite{mathieu2015deep, zhang2016colorful, larsson2016learning, noroozi2016unsupervised}. Deep representation learning for anomaly detection often involves self-supervised learning techniques. One common approach is the use of autoencoders \cite{deepsvdd}, which learn to reconstruct input data by compressing it into a low-dimensional representation and then decoding it back to its original form. Another self-supervised method is rotation classification \cite{geom, mhrot}, where the model learns to classify the correct rotation of an input image, forcing it to capture useful visual features. Contrastive learning methods \cite{csi, droc,density4_eccv22_contrastive,contrative_ssl2_eccv22,density2_cvpr23_contrastive} have also been employed. These methods aim to learn representations that maximize the similarity between augmented versions of the same image while minimizing similarity to other images. Another representation learning paradigm for anomaly detection is to combine pretrained representations with anomaly scoring functions \cite{perera2019learning, panda_,patchcore,mean_shifted,transformaly,pretrained1_cvpr23,pretrained2_cvpr23,pretrained3_cvpr23}. This involves using representations pretrained on large-scale external dataset, such as ImageNet classification, and then fine-tuning them for the specific anomaly detection task using the provided normal samples in the training set. These methods achieve high performance in academic anomaly detection datasets.

\textbf{Over-expressivity.} The concept of over-expressivity has been discussed in the general machine learning literature \cite{iml_book}, but its specific implications for anomaly detection have been relatively overlooked. While anomaly detection deep learning approaches have achieved impressive performance on academic anomaly detection datasets, recent studies have revealed limitations in detecting anomalies beyond well-studied object-centric datasets. It has been observed \cite{lars} that even highly expressive representations fail to identify simple anomalies when the anomalies lie in unexplored attribute spaces. While the paper suggested that using the Mahalanobis distance (essentially a Gaussian density model) might mitigate this issue, our analysis shows that issues remain even under such models.

\textbf{Guidance in anomaly detection.} Guidance plays a crucial role in anomaly detection, as prior knowledge is often required to effectively identify unexpected patterns. Approaches such as PANDA \cite{panda_} and PatchCore \cite{patchcore} utilize pre-trained representations, incorporating implicit priors from ImageNet or CLIP \cite{pretrained3_cvpr23,ai_vad} to enhance anomaly detection. Out-of-distribution (OOD) detection methods \cite{hendrycks_ood,ood_balaji,galil2023framework} assume anomalies have unusual values in pre-specified attributes, while outlier exposure methods \cite{oe, mirzaei2022fake} rely on external datasets resembling anomalies. Other methods \cite{zhang2023robustness,red_panda} allow users to specify irrelevant attributes. Similarly, self-supervised methods \cite{mhrot,csi,droc,ssl1_eccv22,contrative_ssl2_eccv22,density4_eccv22_contrastive,density2_cvpr23_contrastive} remove nuisance factors from the representation using augmentation. In this paper, we analyze guidance through our developed framework, exploring its impact on the trade-off between representation expressivity and anomaly detection performance (\cref{sec:guidance_analysis}).

\section{A Motivating Example - Trade-off between Expressivity and Representation Sufficiency}

\begin{wrapfigure}{r}{0.4\textwidth}
\vspace{-1.5em}
    \centering
    \includegraphics[width=0.4\textwidth]{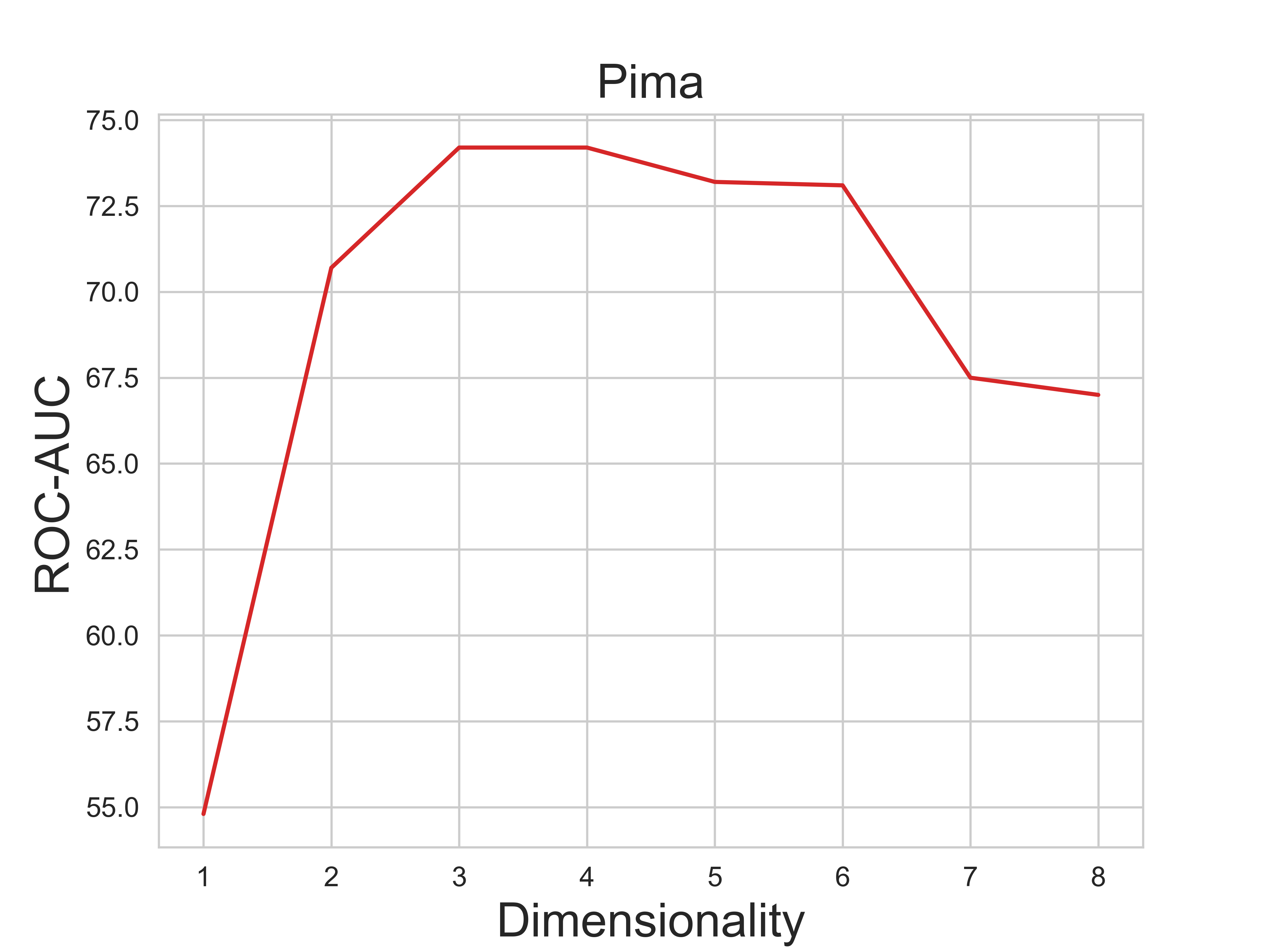} 
	\caption{Anomaly detection performance on the Pima dataset as the dimensionality of the representation increases.}
	\label{fig:motivating_example}
\end{wrapfigure}

We aim to elucidate the trade-off between over-expressivity and the sufficiency of representations in anomaly detection. In this section, we motivate this research by considering a simple, tabular anomaly detection task. The established Pima dataset (From ODDS \cite{odds}), contains biomarker attributes for a set of patients including pregnancies, glucose levels, blood diabetes pedigree, insulin, and age. Each patient is also labeled as healthy or having diabetes. The Pima dataset is converted to anomaly detection by considering healthy patients as normal data. At training time an anomaly detector is trained on a training set containing only healthy patients. At test time, a new patient is evaluated and the model is required to predict whether the patient is normal (healthy) or anomalous. The evaluation metric is the area under the ROC curve (ROC-AUC) between the model predictions and the ground truth healthy/diabetes labels. For this illustrative example, we use a $k$NN anomaly detector. Despite its simplicity, this simple detector achieves state-of-the-art results on this dataset. 

The expressivity of the representation is critical to our approach. We use the number of dimensions $d$ as a proxy for the expressivity. We gradually increase the dimensionality of the representation $d$ by starting from a single attribute ($d=1$) and adding features (by relevancy) to the representation until it contains all the provided attributes ($d=8$). The results are provided in \cref{fig:motivating_example}. It is clear that anomaly detection performance increases as the number of attributes is increased from $d=1$ to $d=4$. However, with every further attribute that is added, results are degraded with the lowest level at $d=8$. We attribute the observed phenomenon to the sufficiency and over-expressivity of the representation. As four out of the eight attributes are necessary for separating patients with and without diabetes, $d=4$ is required for a sufficient representation. However, the remaining four attributes are a nuisance for the anomaly detection tasks making the representation over-expressive. When including irrelevant attributes in the representation, the $k$NN density estimator becomes less sensitive to the anomalies of interest. The presence of non-informative attributes reduces the discriminative power of the representation.

This motivating example highlights the delicate balance between representation sufficiency and over-expressivity in anomaly detection. It demonstrates that blindly increasing representation expressivity does not guarantee improved detection performance. Instead, it is crucial to navigate the trade-off between representation complexity and sensitivity to anomalies within specific attributes. This requires priors on the nature of the anomalies, which requires some form of guidance that can be implicit or explicit.

\section{Bias-Variance Tradeoff}

In this section, we use a simple toy model to demonstrate the tradeoff between sufficiency and over-expressivity. Specifically, it shows that increasing the number of dimensions used by an anomaly detector will eventually reduce the anomaly detection capabilities. We show that as the number of dimensions $d$ tends to infinity, the sensitivity (true positive rate) approaches the false positive rate. As random detectors also achieve sensitivity equal to the false positive rate this constitutes a complete failure of the detector.  We focus on anomaly detection by density estimation as the leading methods for both image anomaly detection and segmentation tasks are based on density estimation \cite{panda_,droc,patchcore,transformaly,density2_cvpr23_contrastive,density4_eccv22_contrastive}.

\subsection{A toy model}

We model the normal data by a multivariate Gaussian distribution of $d$ dimensions with an identity covariance matrix. We further assume that the anomalous data distribution is identical to the normal data except for a shift in the first axis. There is no loss of generality as the distributions are spherically symmetric, and therefore any offset between the means of the distributions can be rotated to align with the first axis. We show that following the result holds: 

\begin{theorem}
\label{theorem1}
Let the population distribution of samples ($D$) be given by: 

\begin{equation}
D=(1-\pi)\cdot\mathcal{N}(0,I) + \pi\cdot\mathcal{N}(\Delta\cdot e_1, I)
\end{equation}

where $\pi$ is a Bernoulli distribution describing the probability of a given sample to be anomalous.
We assume an unknown ground truth anomaly labeling function $l:D\to\{0,1\}$. We compute the anomaly score for test samples using their likelihood with respect to the normal data distribution: $s_d(x)=\frac{e^{-\frac{1}{2}||x||^2}}{{\sqrt{2\pi}}^{d}}$. Samples are classified as anomalous if their likelihood $s_d(x)$ is lower than the threshold $t_{d}$. We prove the following statements hold:

\begin{enumerate}
    \item 
    The true positive rate converges to the false positive rate as the dimension $d$ tends to infinity.

    \item For large values of $d$, the difference between the true positive rate and the false positive rate decays as $\frac{1}{\sqrt{d}}$.
\end{enumerate}
\end{theorem}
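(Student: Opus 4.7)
The plan is to reduce the theorem to comparing tail probabilities of a central and a non-central chi-square, and then to apply the central limit theorem with a first-order Taylor expansion. First I would observe that $s_d(x) = (2\pi)^{-d/2} e^{-\|x\|^2/2}$ is strictly decreasing in $\|x\|^2$, so the decision rule $s_d(x) < t_d$ is equivalent to $\|x\|^2 > T_d$ for some threshold $T_d$. Under the normal component $\mathcal{N}(0, I)$, $\|X\|^2 \sim \chi^2_d$ with mean $d$ and variance $2d$; under the anomalous component $\mathcal{N}(\Delta e_1, I)$, decomposing $\|X\|^2 = X_1^2 + \sum_{i\ge 2} X_i^2$ with $X_1 \sim \mathcal{N}(\Delta, 1)$ independent of the standard-normal tail shows $\|X\|^2 \sim \chi^2_d(\Delta^2)$ with mean $d + \Delta^2$ and variance $2d + 4\Delta^2$. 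Thus $\mathrm{FPR} = P(\chi^2_d > T_d)$ and $\mathrm{TPR} = P(\chi^2_d(\Delta^2) > T_d)$.

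Next I would apply the CLT. Both chi-squares are sums of $\Theta(d)$ independent terms, so their standardizations converge to $\mathcal{N}(0,1)$. Fixing $\mathrm{FPR} = \alpha$ and inverting yields $T_d = d + z_\alpha\sqrt{2d} + R_d$, with $z_\alpha = \Phi^{-1}(1-\alpha)$ and $R_d$ controlled by Berry-Esseen. Substituting into the standardization of $\chi^2_d(\Delta^2)$ and using $1/\sqrt{1 + 2\Delta^2/d} = 1 + O(1/d)$,
\begin{equation}
\frac{T_d - (d + \Delta^2)}{\sqrt{2d + 4\Delta^2}} \;=\; z_\alpha \;-\; \frac{\Delta^2}{\sqrt{2d}} \;+\; O\!\left(\tfrac{1}{\sqrt{d}}\right).
\end{equation}
A first-order Taylor expansion of $1 - \Phi$ about $z_\alpha$ then gives $\mathrm{TPR} - \mathrm{FPR} = \phi(z_\alpha)\,\Delta^2/\sqrt{2d} + O(1/\sqrt{d})$, from which both claims follow: Statement~1 because the right-hand side vanishes, and Statement~2 because the entire expression is $O(1/\sqrt{d})$.

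The main obstacle I anticipate is making the lower-order terms fully rigorous, since a plain Berry-Esseen bound for $\chi^2_d$ already contributes an error of order $1/\sqrt{d}$, comparable to the effect we want to extract. To obtain a clean asymptotic rather than a mere upper bound of the correct form, I would invoke an Edgeworth expansion for chi-square tails (valid because the density is smooth with finite moments of all orders), which refines Berry-Esseen and isolates the leading term $\phi(z_\alpha)\Delta^2/\sqrt{2d}$ explicitly. Statement~1, being just convergence without a rate, requires none of this refinement and follows directly from the CLT applied to both chi-square standardizations.
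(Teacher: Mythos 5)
Your proposal is correct and follows essentially the same route as the paper: both reduce the decision rule to a threshold on $\|x\|^2$, compare the $\chi^2_d$ tail (normal data) against the noncentral $\chi^2_d(\Delta^2)$ tail with mean $d+\Delta^2$ and variance $2d+4\Delta^2$ (anomalous data), invoke the CLT with a threshold of the form $d+y\sqrt{2d}$, and arrive at the identical leading term $\Delta^2\,\phi(z_\alpha)/\sqrt{2d}=\Delta^2\cdot\tfrac{1}{\sqrt{2\pi\cdot 2d}}e^{-z_\alpha^2/2}$ --- your first-order Taylor expansion of $1-\Phi$ is just a repackaging of the paper's change-of-variables-plus-sup-bound on the residual integral. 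Your closing observation that a plain Berry--Esseen error is of the same order as the effect, so an Edgeworth expansion is needed for a clean asymptotic, is a legitimate rigor point that the paper's proof simply glosses over by treating the normal approximation to the chi-square as exact.
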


\begin{proof}[Proof Sketch of Theorem \ref{theorem1}]
We begin with the following lemma.
\begin{lem}
\label{lem_1}
 Let $t_{d}(y) \equiv \frac{e^{-\frac{1}{2}(y\sqrt{2d}+d)}}{\sqrt{2\pi}^d}$ where $y\in\mathbb{R}$ parameterized the sensitivity of the method. For a sufficiently large $d$ for the central limit theorem to hold, the false positive rate of the method converges to a constant.
\end{lem}

The proof of Lemma.~\ref{lem_1} is provided in App.~\ref{app:lemma_proof}. \hfill

Having set the threshold as $t_{d}(y)$, the false positive rate is fixed; we compute the dependence of the true positive rate on the dimension $d$.

To calculate the true positive rate, we need to compute the distribution of squared norms $\|x\|^2$ of anomalous examples. By \cite{chi_square_proof}, we have that for large $d$ the distribution is:

\begin{gather*}
     \|x\|^2 \sim \mathcal{N}(2d+\Delta^2, 4\Delta^2+2d)
\end{gather*}

In App.~\ref{app:thm_proof}, we derive an expression for the true positive rate:
\begin{gather*}
    TPR(t_{d}) = Pr( \|x\|^2 > y\sqrt{2d}+d ~|~ l(x)=1) = \ldots
    \\ = \int_{y\sqrt{2d}+d}^{\infty} \frac{1}{\sqrt{2\pi\cdot(4\Delta^2 + 2d)}} \cdot e^{-\frac{1}{2}\cdot(\frac{y' - (d + \Delta^2)}{\sqrt{4\Delta^2 + 2d}})^2}dy' \underset{\lim_{d\to\infty}}{\approx} FPR(t_{d}) +  
\Delta^2 \cdot \frac{1}{\sqrt{2\pi\cdot2d}} \cdot e^{-\frac{1}{2}\cdot y^{2}} 
\end{gather*}

 We see that the true positive rate decomposes into two terms: the false positive rate and another factor converging to $0$ at a rate of $\frac{1}{\sqrt{d}}$. As equal true positive and false positive rates indicate a random detector, the result describes convergence toward random predictions.

\end{proof}

\subsection{No Free Lunch}

Our theoretical analysis reveals a fundamental trade-off between representation sufficiency and over-expressivity in anomaly detection. While it is tempting to believe that increasing the expressivity of representations will lead to continuous improvements in anomaly detection performance, we provide evidence to the contrary. Building upon these findings, we introduce the concept of the \textit{"no free lunch"} principle in anomaly detection: including an ever-increasing number of attributes deteriorates our ability to detect anomalies in each of them. Therefore, trying to build a detector that would catch all types of anomalies is impossible, and detection must rely on some prior assumption regarding the nature of the anomalies we look for.
In other words, there is no universal anomaly detection method that excels in all scenarios without explicit guidance. 

This finding prompts us to explore the practical implications of over-expressivity in state-of-the-art representations. Thus, in what following sections, we conduct an extensive empirical investigation to demonstrate how current anomaly detection methods are affected by this "no free lunch" principle. By evaluating these methods across diverse datasets and anomaly types, we aim to showcase the real-world implications of over-expressive representations and shed light on the limitations of existing approaches.

\section{Empirical Investigation - Quantifying the effect of over-expressivity}
\label{sec:exp}

We conducted an extensive empirical study to investigate the impact of over-expressivity and the limitations of existing anomaly detection methods. We select a range of datasets to represent a wide range of anomalies and evaluate the generalization of anomaly detection approaches beyond novel object class anomalies.

\textbf{Datasets.} The following datasets were used in our analysis: \textit{Cars3D} \cite{cars3d}: This dataset consists of 17,568 3D car models with variations in elevation, rotation (azimuth), and car model (object). \textit{RaFD} \cite{rafd}: The Radboud Faces Database (RaFD) contains 8,040 facial images of different individuals expressing various emotions. Each image is annotated with emotional attributes such as happiness, anger, sadness, etc as well as person identity and head pose (angle). \textit{CelebA} \cite{celeba}: The CelebA dataset consists of celebrity face images with annotated attributes. In \cref{fig:data} we present a visualization of the dataset samples and the attributes we used for our evaluations. For each dataset, we split the data into training and testing sets with an 85:15 ratio. We trained the anomaly detection models on the training set and evaluated their performance on the testing set. Full training and implementation details are in App.~\ref{app:exp}.

\textbf{Baselines.} We conducted evaluations on prominent self-supervised and pretrained feature adaptation anomaly detection methods, along with an out-of-distribution (OOD) method \cite{ood_balaji} that incorporates guidance on the relevant attribute. The baselines we considered include DN2 with unadapted ImageNet pretrained ResNet features, PANDA \cite{panda_}, MSAD \cite{mean_shifted}, pretrained DINO \cite{dino, reiss_workshop}, and CSI \cite{csi}. To explore the impact of using guidance, we provided attribute values during the training phase for the OOD method. Specifically, we followed the state-of-the-art OOD method \cite{ood_balaji} which finetune a pretrained backbone using multi-class classification objective (provided by attribute values guidance information). It is important to note that the model was trained exclusively on normal data, like the other baselines. All methods are applied with their default hyperparameters as given in their official implementations. Unless otherwise specified, they use the same ResNet18 \cite{resnet} backbone architecture (except DINO, which utilizes a ResNet50 architecture).

\begin{figure}[t]
  \begin{center}
    \begin{tabular}{c}
    \includegraphics[scale=0.9875]{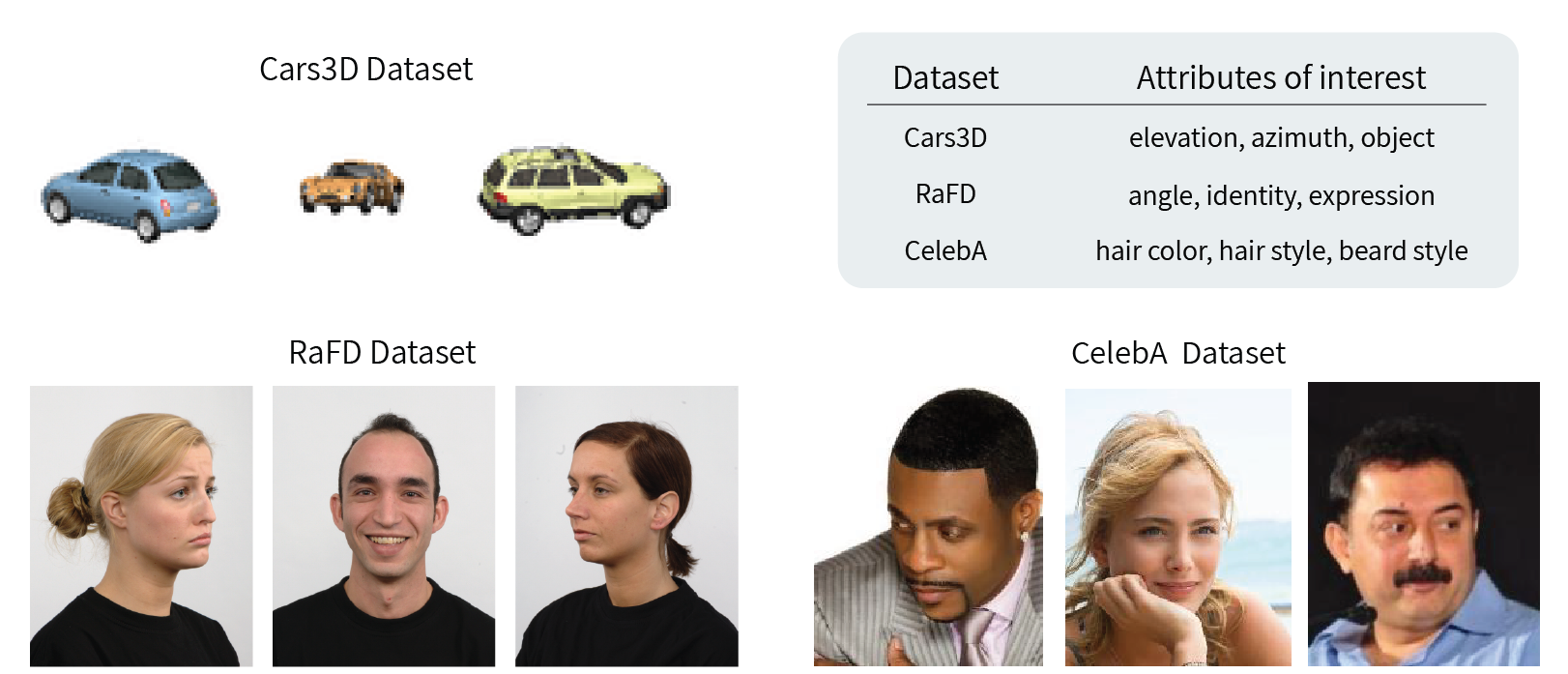} 
    \end{tabular}
  \end{center}
    \caption{Representative images of the different datasets, and their attributes of interest.}
    \label{fig:data}
\end{figure}

\subsection{Experimental Setting}
To evaluate the efficacy of each method, we conducted experiments among many datasets and settings: multi-value normal data and one-class classification. This comprehensive approach allows us to capture the overall performance trends and provide a comprehensive analysis. We report the average performance (ROC-AUC metric) obtained in each evaluation, providing a robust assessment of the models' capabilities. We used a standard $k$-nearest neighbors ($k$NN) density estimator for anomaly scoring. To assess the over-expressivity of our baselines, we adopted a supervised linear probing approach on the \textit{representation already learned} from the anomaly detection task. This involved augmenting the training set with anomalous samples and training a linear classifier on the baselines' \textit{representations} to differentiate between normal and anomalous instances. The resulting logits from the linear probing experiment were used as the anomaly scores. The same test sets were used for both the $k$NN density estimator and linear probing anomaly scoring. In each experiment, we evaluated the performance of the anomaly detection models across the first three values of each attribute within the given dataset. The implementation details can be found in App.~\ref{app:exp}.

\textbf{Single-value and multi-value classification settings.}
In the single-value setting, the normal data consists of a single semantic class (one-class classification), and no labels are provided. For each experiment, we designated a single value as \textit{normal} while considering all other values as \textit{anomalous}. We further assessed the performance of the methods in the multi-value setting as introduced by \cite{ahmed_courville}. In this setting, the normal data consists of multiple semantic classes, and no labels are provided. This setting poses a greater challenge compared to the single-value setting, as the normal data exhibits a multi-modal distribution. For each experiment conducted in this setting, we designated a single value as \textit{anomalous} while considering all other values as \textit{normal}. In both settings, the representations were trained solely on the normal samples.

\begin{table}[t]
  \caption{Single-value anomaly detection performance (mean ROC-AUC \%). $k$NN best in \textbf{bold black} and linear probing best in \textbf{\textcolor{blue}{bold blue}}.}
  \label{tab:single_value}
  \centering
  \begin{tabular}{llcccccccccc}
    \toprule
     \multirow{2}{*}{\rotatebox{90}{Data}} & Method & \multicolumn{2}{c}{CSI} & \multicolumn{2}{c}{DN2} & \multicolumn{2}{c}{PANDA} &  \multicolumn{2}{c}{MSAD} &  \multicolumn{2}{c}{DINO} \\
    \cmidrule(r){2-2}
    \cmidrule(r){3-4}
    \cmidrule(r){5-6}
    \cmidrule(r){7-8}
    \cmidrule(r){9-10}
    \cmidrule(r){11-12} 
     & Relevant attr. & $k$NN & Lin. &  $k$NN & Lin. &  $k$NN & Lin. &  $k$NN & Lin. &  $k$NN & Lin. \\
     \cmidrule(r){2-2}
    \cmidrule(r){3-4}
    \cmidrule(r){5-6}
    \cmidrule(r){7-8}
    \cmidrule(r){9-10}
    \cmidrule(r){11-12} 
    \multirow{4}{*}{\rotatebox{90}{Cars3D}} & Elevation & 69.8 & 88.2 & 79.5 & 88.6 & \textbf{80.2} & 88.4 & 79.5 & \textbf{\textcolor{blue}{89.0}} & 77.4 & 83.7\\
    & Azimuth & 73.1 & 93.6 & 89.2 & \textbf{\textcolor{blue}{98.0}} & 90.4 & 97.9 & 88.6 & \textbf{\textcolor{blue}{98.0}} & \textbf{91.1} & 97.3\\
    & Object & 91.7 & 86.5 & 99.7 & \textbf{\textcolor{blue}{99.9}} & 99.7 & \textbf{\textcolor{blue}{99.9}} & 99.7 & \textbf{\textcolor{blue}{99.9}} & \textbf{99.9} & \textbf{\textcolor{blue}{99.9}}\\
    & Average & 78.2 & 89.4 & 89.5 & 95.5 & \textbf{90.1} & 95.5 & 89.3 & \textbf{\textcolor{blue}{95.7}} & 89.5 & 93.5\\
    \midrule
    \multirow{4}{*}{\rotatebox{90}{RaFD}} & Angle & 89.2 & 95.2 & 98.2 & \textbf{\textcolor{blue}{100}} & 98.9 & \textbf{\textcolor{blue}{100}} & 99.5 & 99.9 & \textbf{99.9} & \textbf{\textcolor{blue}{100}}\\
    & Identity & 83.2 & 85.2 & 98.2 & 99.9 & 98.2 & 99.9 & \textbf{100} & \textbf{\textcolor{blue}{100}} & \textbf{100} & \textbf{\textcolor{blue}{100}} \\
    & Expression & 53.8 & 66.0 & 57.2 & 82.3 & 57.2 & 81.1 & 56.7 & 83.1 & \textbf{60.0} & \textbf{\textcolor{blue}{89.2}} \\
    & Average & 75.4 & 82.1 & 84.5 & 94.1 & 84.8 & 93.7 & 85.4 & 94.3 & \textbf{86.7} & \textbf{\textcolor{blue}{96.1}} \\
    \midrule
    \multirow{4}{*}{\rotatebox{90}{CelebA}} & Hair color & 50.2 & 80.9 & 56.9 & 84.5 & 57.0 & 82.0 & \textbf{59.7} & 85.3 & 58.5 & \textbf{\textcolor{blue}{87.1}} \\
    & Hair style & 51.1 & 79.5 & 54.4 & 81.6 & 54.5 & 79.9 & \textbf{56.6} & 82.4 & 55.3 & \textbf{\textcolor{blue}{84.6}} \\
    & Beard style & 50.8 & 87.8 & 57.6 & 91.5 & 58.9 & 90.8 & \textbf{62.6} & 91.1 & 60.6 & \textbf{\textcolor{blue}{94.2}} \\
    & Average & 50.7 & 82.7 & 56.3 & 85.9 & 56.8 & 84.2 & \textbf{59.6} & 86.3 & 58.1 & \textbf{\textcolor{blue}{88.7}}\\
    \bottomrule
  \end{tabular}
\end{table}

\subsection{Main Results}
\label{sec:main_results}
We present the evaluation results for the single-value setting in \cref{tab:single_value}. Our findings reveal inconsistencies in anomaly detection across different attributes of interest. Surprisingly, even in the case of the synthetic Cars3D dataset, where we initially expected the baselines to perform very well, we observe low accuracy in detecting elevation and azimuth anomalies. We found a notable difference between $k$NN and linear probing performance for each method. This gap emphasizes the over-expressivity of the representations, as the $k$NN estimator becomes less sensitive to the anomalies of interest. Linear classifier achieves higher anomaly detection accuracy as it can learn to focus on the relevant attributes. The results on Cars3D underscore the over-expressivity of the representation and indicate that very expressive representations fail to detect simple anomalies when evaluated beyond the well-studied object-centric datasets. In the RaFD dataset, our baselines demonstrate strong performance in detecting anomalies related to the angle and identity attributes. However, they struggle when confronted with anomalies associated with the expression attribute. Similarly, in the CelebA dataset, we encounter an even more challenging scenario where all baselines exhibit poor performance across all attributes. In some cases, the limited effectiveness of the linear probing approach indicates the representations are not sufficiently expressive, as they fail to sufficiently encode the relevant attribute information. Consequently, the representation cannot effectively differentiate between normal and anomalous data.

\cref{tab:multi_value} presents the results of the multi-value setting, where we conducted additional evaluations of our baselines and observed similar trends to those in the single-value setting. However, the multi-value setting provided an opportunity to investigate the out-of-distribution (OOD) baseline that incorporates attribute-specific guidance. Our results strongly support the notion that prior knowledge of the relevant attributes significantly aids in discriminating anomalies. Although the performance is still imperfect, OOD outperformed the other baselines. Moreover, the linear probing results demonstrate nearly perfect performance for all attributes in Cars3D and RaFD, surpassing other baselines by a large margin. This highlights the over-expressivity of the representation, particularly in the context of the OOD baseline, and underscores the importance of incorporating guidance regarding the relevant attributes.
Lastly, our evaluation demonstrates the notable performance advantage of pretrained methods compared to self-supervised methods in the context of anomaly detection. Specifically, the pretrained baselines consistently outperform CSI, a representative self-supervised method, which aligns with previous research findings in the field.

\subsection{Analysis of Guidance in Leading Anomaly Detection Algorithms}
\label{sec:guidance_analysis}

Having established that representations must be sufficient but not over-expressive, it is clear that guidance must be used to construct appropriate representations. While anomaly detection methods strive to use minimal guidance, nearly all deep anomaly detection methods use some form or another. Here, we analyze the top-performing anomaly detection paradigms through the lens of guidance for reducing representation over-expressivity while preserving the relevant information.

\textbf{Self-supervised learning (SSL) methods.} SSL methods \cite{geom,csi,droc,ssl1_eccv22,density4_eccv22_contrastive,density2_cvpr23_contrastive} have gained popularity in anomaly detection due to their ability to reduce labeling burden and impose weaker priors on expected anomalies. However, SSL methods still provide quite strong priors on the expected anomalies. Contrastive methods, which are the top-performing SSL methods, require representations to be invariant to well-designed augmentations (e.g. horizontal reflections or translations). While this reduces the number of degrees of freedom in the representation $d$, reducing over-expressivity, it poses a challenge when the anomaly lies within the invariant factors, such as an unusual horizontal flip or image translation. Many SSL methods also use negative augmentations to guide the representation to specified relevant attributes, e.g., rotations in contrastive methods \cite{csi,droc} or generative models in anomaly segmentation (e.g. \cite{li2021cutpaste,contrative_ssl2_eccv22}). While this increases representation sufficiency, the choice of augmentations must align with the characteristics of the (unknown) anomalies.

\begin{table}[t]
  \caption{Multi-value anomaly detection performance (mean ROC-AUC \%). $k$NN best in \textbf{bold black} and linear probing best in \textbf{\textcolor{blue}{bold blue}}.}
  \label{tab:multi_value}
  \centering
  \begin{tabular}{llcccccccccc}
    \toprule
     \multirow{2}{*}{\rotatebox{90}{Data}} & Method & \multicolumn{2}{c}{CSI} & \multicolumn{2}{c}{DN2} & \multicolumn{2}{c}{PANDA} &  \multicolumn{2}{c}{MSAD} &  \multicolumn{2}{c}{OOD} \\
    \cmidrule(r){2-2}
    \cmidrule(r){3-4}
    \cmidrule(r){5-6}
    \cmidrule(r){7-8}
    \cmidrule(r){9-10}
    \cmidrule(r){11-12} 
     & Relevant attr. & $k$NN & Lin. &  $k$NN & Lin. &  $k$NN & Lin. &  $k$NN & Lin. &  $k$NN & Lin. \\
     \cmidrule(r){2-2}
    \cmidrule(r){3-4}
    \cmidrule(r){5-6}
    \cmidrule(r){7-8}
    \cmidrule(r){9-10}
    \cmidrule(r){11-12} 
    \multirow{4}{*}{\rotatebox{90}{Cars3D}} & Elevation & 67.6 & 99.1 & 58.5 & 88.6 & 51.8 & 87.6 & 59.2 & 89.3 & \textbf{69.1} & \textbf{\textcolor{blue}{99.2}}\\
    & Azimuth & 42.2 & 97.4 & 82.4 & 98.0 & 79.8 & 98.5 & 81.8 & 98.4 & \textbf{84.1} & \textbf{\textcolor{blue}{99.9}}\\
    & Object & 75.0 & 99.7 & 87.9 & \textbf{\textcolor{blue}{99.9}} & 78.1 & 98.6 & 85.1 & \textbf{\textcolor{blue}{99.9}} & \textbf{96.8} & \textbf{\textcolor{blue}{99.9}}\\
    & Average & 61.6 & 98.7 & 76.2 & 95.5 & 69.9 & 94.9 & 75.3 & 95.9 & \textbf{83.3} & \textbf{\textcolor{blue}{99.7}}\\
    \midrule
    \multirow{4}{*}{\rotatebox{90}{RaFD}} & Angle & 88.0 & 98.5 & 97.7 & \textbf{\textcolor{blue}{100}} & 88.5 & \textbf{\textcolor{blue}{100}} & 98.5 & \textbf{\textcolor{blue}{100}} & \textbf{100} & \textbf{\textcolor{blue}{100}} \\
    & Identity & 66.9 & 85.4 & 97.5 & 99.9 & 93.4 & 99.7 & 99.3 & \textbf{\textcolor{blue}{100}} &\textbf{99.9} & \textbf{\textcolor{blue}{100}}\\
    & Expression & 49.7 & 64.5 & 54.7 & 82.2 & 53.1 & 81.8 & 54.8 & 83.1 & \textbf{70.8} & \textbf{\textcolor{blue}{97.9}}\\
    & Average & 68.2 & 82.8 & 83.3 & 94.1 & 78.3 & 93.8 & 84.2 & 94.4 & \textbf{90.2} & \textbf{\textcolor{blue}{99.3}}\\
    \midrule
    \multirow{4}{*}{\rotatebox{90}{CelebA}} & Hair color & 53.2 & 84.1 & 50.1 & 84.5 & 50.6 & 84.5 & 50.2 & 85.4 & \textbf{58.6} & \textbf{\textcolor{blue}{88.3}} \\
    & Hair style & 51.0 & 79.7 & 51.4 & 81.6 & 52.1 & 77.7 & 51.8 & 82.5 & \textbf{55.5} & \textbf{\textcolor{blue}{83.7}} \\
    & Beard style & 53.9 & 91.4 & 61.0 & 91.5 & 63.1 & 89.4 & 61.0 & 91.2 & \textbf{71.7} & \textbf{\textcolor{blue}{95.5}}\\
    & Average & 52.7 & 85.0 & 54.2 & 85.9 & 55.2 & 82.0 & 54.3 & 86.4 & \textbf{61.9} & \textbf{\textcolor{blue}{89.2}}\\
    \bottomrule
  \end{tabular}
\end{table}

\textbf{Pretrained representations.} Recently, anomaly detection methods based on pretrained representations, particularly those based on ImageNet or CLIP, have emerged as dominant approaches in image anomaly detection and segmentation benchmarks \cite{panda_,patchcore}. These pretrained representations have significantly increased the expressivity of the models, as shown in \cref{tab:single_value}. By leveraging large-scale pretraining on massive datasets, these representations capture a wide range of visual patterns and features, enhancing their potential to be sufficient for anomaly detection tasks. However, it is important to note that the increase in expressivity is not uniform across all aspects of anomalies. ImageNet-pretrained representations, for example, exhibit a strong bias towards object categories rather facial than expressions. As a result, they are more adept at detecting anomalies related to object categories rather than anomalies associated with expression variations (as can be seen in \cref{sec:main_results}). This selective expressivity illustrates the inherent tradeoff in using guidance for anomaly detection. While pretrained representations can help reduce over-expressivity and improve performance on certain types of anomalies, they also introduce the risk of insufficient representations when the guidance is not aligned with the anomalies of interest.

\textbf{Out-of-distribution (OOD) detection.} In OOD, the user selects a specific factor, such as an object class, that is considered crucial for identifying anomalies. The values of this factor in the normal data are labeled. In this approach, a sample exhibiting an unusual value of the specified attribute, such as belonging to a novel class, is labeled as anomalous. It is important to note that samples with highly unusual values in other attributes may not be classified as anomalous in this paradigm. Mathematically, this corresponds to choosing a representation with a $d=1$ dimensionality. By selecting a single attribute as the basis for anomaly detection, the method achieves high accuracy if the guidance is correct and the chosen attribute is indeed the relevant one for identifying anomalies. However, if the user's guidance attribute does not align with the true relevant attribute, the insufficiency of the representation significantly reduces the accuracy of the anomaly detection process.
We provide empirical support for this claim in \cref{tab:ood}. We see that relying on an irrelevant attribute for guidance in OOD detection results in significantly reduced accuracy (see App.~\ref{app:exp} for details). This highlights the "no free lunch" aspect of anomaly detection representations.

\begin{table}[t]
  \caption{OOD multi-value anomaly detection performance (mean ROC-AUC \%) when guidance attribute is irrelevant. $k$NN best in \textbf{bold black} and linear probing best in \textbf{\textcolor{blue}{bold blue}}.}
  \label{tab:ood}
  \centering
  \begin{tabular}{lllcccc}
    \toprule
     \multirow{2}{*}{\rotatebox{90}{Data}} & Method & & \multicolumn{2}{c}{OOD with irrelevant attribute} &  \multicolumn{2}{c}{OOD} \\
    \cmidrule(r){2-2}
    \cmidrule(r){3-3}
    \cmidrule(r){4-5}
    \cmidrule(r){6-7}
     & Relevant attribute & Guidance attribute & $k$NN & Lin. & $k$NN & Lin. \\
     \cmidrule(r){2-2}
     \cmidrule(r){3-3}
    \cmidrule(r){4-5}
    \cmidrule(r){6-7}
    \multirow{4}{*}{\rotatebox{90}{Cars3D}} & Elevation & Azimuth & 55.6 & 78.7 & \textbf{69.1} & \textbf{\textcolor{blue}{99.2}}\\
    & Azimuth & Object & 70.3 & 95.3 & \textbf{84.1} & \textbf{\textcolor{blue}{99.9}}\\
    & Object & Azimuth & 67.1 & 98.0 & \textbf{96.8} & \textbf{\textcolor{blue}{99.9}}\\
    & Average &  & 64.3 & 90.7 & \textbf{83.3} & \textbf{\textcolor{blue}{99.7}}\\
    \midrule
    \multirow{4}{*}{\rotatebox{90}{RaFD}} & Angle & Expression & 99.8 & \textbf{\textcolor{blue}{100}} & \textbf{100} & \textbf{\textcolor{blue}{100}} \\
    & Identity & Angle & 72.8 & \textbf{\textcolor{blue}{100}} & \textbf{99.9} & \textbf{\textcolor{blue}{100}}\\
    & Expression & Identity & 55.3 & 84.5 & \textbf{70.8} & \textbf{\textcolor{blue}{97.9}}\\
    & Average & & 76.0 & 94.8 & \textbf{90.2} & \textbf{\textcolor{blue}{99.3}}\\
    \midrule
    \multirow{4}{*}{\rotatebox{90}{CelebA}} & Hair color & Beard style & 54.2 & 84.9 & \textbf{58.6} & \textbf{\textcolor{blue}{88.3}} \\
    & Hair style & Hair color &52.4 & 80.0 & \textbf{55.5} & \textbf{\textcolor{blue}{83.7}} \\
    & Beard style & Hair style & 64.9 & 90.4 & \textbf{71.7} & \textbf{\textcolor{blue}{95.5}}\\
    & Average & & 57.2 & 85.1 & \textbf{61.9} & \textbf{\textcolor{blue}{89.2}}\\
    \bottomrule
  \end{tabular}
\end{table}

\textbf{Outlier exposure (OE).} In this setting, the user specifies a training set of samples that they assume to be more similar to the anomalies than normal data. The anomaly detection task then becomes supervised, similar to our linear probing setting. Unlike density estimation approaches, OE methods are not penalized for over-expressivity in their representations, as the method can learn to ignore nuisance factors. On the other hand, OE methods heavily rely on the similarity between the simulated anomalies (OE data) and the true anomalies that the model will encounter during testing. When the OE data accurately captures the characteristics of the anomalies, model performance is improved. On the other hand, if the OE data is misspecified or does not adequately represent the true anomalies, model performance is reduced.

\textbf{Nuisance disentangled representations.} This class of methods (e.g., \cite{red_panda}, \cite{zhang2023robustness}) directly reduces over-expressivity by explicitly removing factors of variation which the user specifies as a nuisance. When the guidance is correct, the reduced over-expressivity results in improved performance. As in the SSL case. When a relevant attribute is labeled as a nuisance, the representation runs the risk of being insufficient.

\section{Limitations}

\textbf{Beyond the toy model.} The simplicity of our toy model allowed us to derive fundamental insights, but real-world normal and anomalous data are often governed by more complex probability distributions and models. We hypothesize that the "no free lunch" principle remains valid for more complex distributions. This investigation is left for future work.

\textbf{Weak and few-shot supervision.} One way to avoid the over-expressivity issue is to use labeled anomalies. The labels can come in different forms e.g., labeling just a few samples, or labeling bags of samples. These techniques can be analyzed through the framework of supervised learning, instead of our framework.

\textbf{Guidance in the real world.} Designing practical guidance methods for learning representations in real-world scenarios remains an open question. Future research must take into account factors such as the availability and reliability of prior knowledge and the ease of providing guidance.

\textbf{Non density estimation methods.} We have not analyzed methods here such as auto-encoders \cite{venkataramanan2020attention} or GAN-based methods \cite{akcay2019ganomaly}. This is justified as density estimation methods \cite{panda_,droc, patchcore} typically perform better than the other techniques. We leave an analysis of the "no free lunch" principle for other anomaly detection frameworks for future work.

\section{Conclusion}
In this paper, we elucidate the performance degradation of anomaly detection methods due to over-expressive representations. Our investigation revealed a fundamental trade-off between representation sufficiency and over-expressivity, highlighting the limitations of solely relying on the increased scale and expressivity of deep networks. Through a novel theoretical toy model, we have demonstrated that increasing representation expressivity does not guarantee improved anomaly detection performance. Our model establishes a "no free lunch" principle for anomaly detection, stating that increased expressivity will eventually lead to performance degradation. Furthermore, we extensively investigated state-of-the-art representations and found that they often suffer from over-expressivity, leading to failure to detect various types of anomalies other than object categories.

\section{Acknowledgements}

Tal Reiss and Niv Cohen are partially supported by funding from the Israeli Science Foundation. Niv Cohen is also partially supported by funding from the Center of Data Science.

{\small

\bibliographystyle{unsrt}
\bibliography{cite}
}

\newpage
\clearpage
\appendix
\centerline{\textbf{\LARGE Appendix}}
\vspace{1.5em}
\centerline{\textbf{\Large No Free Lunch: The Hazards of Over-Expressive}}
\centerline{\textbf{\Large Representations in Anomaly Detection}}
\vspace{1em}
\section{Proof of Lemma \ref{lem_1}}
\label{app:lemma_proof}
\begin{lem_1}
 Let $t_{d}(y) \equiv \frac{e^{-\frac{1}{2}(y\sqrt{2d}+d)}}{\sqrt{2\pi}^d}$ where $y\in\mathbb{R}$ parameterizes the sensitivity of the method. For $d$ sufficiently large  for the central limit theorem to hold, the false positive rate of the method converges to a constant.
\end{lem_1}
\begin{proof}[Proof of Lemma \ref{lem_1}]

The relation between the false positive rate (FPR) and the parameter $y$ is given by:
\begin{gather*}
      FPR(t_{d}) = Pr(s_{d}(x) < t_{d}~|l(x)=0) 
\end{gather*}  
Recall that for the normal data $\log(s_d(x)) = -\sum_{i=1}^d \frac{1}{2} x_i^2 - \frac{d}{2} log(2 \pi)$, and that $\log(t_d) = -\frac{1}{2}(y\sqrt{2d}+d) - \frac{d}{2} log(2 \pi)$. Therefore, the expression can be simplified as:
\begin{gather*}
     FPR(t_{d}) = Pr( \sum_{i=1}^{d} x_i^2 > y\sqrt{2d}+d) 
    \\ = Pr( \frac{\sum_{i=1}^{d} x_i^2 - d}{\sqrt{2d}} > y) 
     = Pr( \frac{Q_{d} - \mu_{d}}{\sigma_{d}} > y) 
\end{gather*}

where we denote $Q_d = \|x\|^2 = \sum_{i=1}^{d} x_i^2$, and $\{x_i\}_{i=1}^{d}$ are independent, standard normal random variables. Therefore, $Q_d$ is distributed according to the chi-squared distribution with $d$ degrees of freedom, i.e., $Q_d \sim \chi_{d}^2$. For moderately large values of $d$, as is the case with deep representations as explored in \cref{sec:exp},  the central limit theorem applies, and therefore for $d >> 1$ we have $\chi_{d}^2 \approx \mathcal{N}(d, 2d) = \mathcal{N}(\mu_d, \sigma_d^2)$.

We may now show that in the limit for large dimensions $d_1, d_2 \in \mathbb{N}$ the relation between $y$ and the false positive rate is independent of the dimensions:

\begin{gather*}
    FPR(t_{d_1}) = Pr( \frac{Q_{d_1} - d_1}{\sqrt{2d_1}} > y ~)   = Pr(  \mathcal{N}(0, 1) > y ~) 
    \\ = Pr( \frac{Q_{d_2} - d_2}{\sqrt{2d_2}} > y ) = FPR(t_{d_2})
\end{gather*}

The equality of both sides of the probability of a normal random variable being larger than $y$ follows from $(d,\sqrt{2d})$ is the mean and the standard deviation of the corresponding $Q_d$ function (in the central limit theorem).

This concludes the proof of the lemma.
\end{proof}

\section{Proof of Theorem \ref{theorem1}}
\label{app:thm_proof}
\begin{thm_1}
Let the population distribution of samples ($D$) be given by: 

\begin{equation}
D=(1-\pi)\cdot\mathcal{N}(0,I) + \pi\cdot\mathcal{N}(\Delta\cdot e_1, I)
\end{equation}

where $\pi$ is a Bernoulli distribution describing the probability of a given sample to be anomalous.
We assume an unknown ground truth anomaly labeling function $l:D\to\{0,1\}$. We compute the anomaly score for test samples using their likelihood with respect to the normal data distribution: $s_d(x)=\frac{e^{-\frac{1}{2}||x||^2}}{{\sqrt{2\pi}}^{d}}$. Samples are classified as anomalous if their likelihood $s_d(x)$ is lower than the threshold $t_{d}$. We prove the following statements hold:

\begin{enumerate}
    \item 
    The true positive rate converges to the false positive rate as the dimension $d$ tends to infinity.

    \item For large values of $d$, the difference between the true positive rate and the false positive rate decays as $\frac{1}{\sqrt{d}}$.
\end{enumerate}
\end{thm_1}

\begin{proof}[Proof of Theorem \ref{theorem1}]
We would like to describe the distance distribution of an anomalous sample $x$ to the center of the normal data distribution, $||x||^2=Q^{anom}_d$ to calculate their likelihood. Having the distribution of their likelihood (according to the normal data model), we can calculate their probability of being below the likelihood threshold $t_d(y)$.
We begin by showing that the distance distribution ($||x||^2$) of the anomalies is of a similar distribution to that of the normal data, with a modification driven by the parameter $\Delta$.
By \cite{chi_square_proof} we have that for large $d$ for the anomalous data:
\begin{gather*}
     Q^{anom}_d = \|x\|^2 = (x'_{1}+\Delta)^2 + \sum_{i=2}^{d} {x'}_{i}^{2}  \sim \mathcal{N}(2d+\Delta^2, 4\Delta^2+2d)
\end{gather*}
Where $x'$ describes the anomalous data distribution around its distribution center (shifted by $\Delta$ from the origin). Namely, $x_1 = x'_1+\Delta$, and $x_i = x'_i$ for $i \neq 1$.

Now it holds that:
\begin{gather*}
    TPR(t_d) = Pr(s_d(x) < t_d ~|~ l(x)=1) = Pr( Q^{anom}_d > y\sqrt{2d}+d ~)
    \\ = \int_{y\sqrt{2d}+d}^{\infty} \frac{1}{\sqrt{2\pi\cdot(4\Delta^2 + 2d)}} \cdot e^{-\frac{1}{2}\cdot(\frac{y' - (d + \Delta^2)}{\sqrt{4\Delta^2 + 2d}})^2}dy'
\end{gather*}

We make the substitution $\eta=\frac{\sqrt{2d}}{\sqrt{4\Delta^2+2d}}y'$ and get:
\begin{gather*}
    \int_{y\sqrt{2d}+d}^{\infty} \frac{1}{\sqrt{2\pi\cdot(4\Delta^2 + 2d)}} \cdot e^{-\frac{1}{2}\cdot(\frac{y' - (d + \Delta^2)}{\sqrt{4\Delta^2 + 2d}})^2}dy'
    \\ =    \int_{\frac{y\cdot2d + d\sqrt{2d}}{\sqrt{4\Delta^2+2d}}}^{\infty} \frac{1}{\sqrt{2\pi\cdot2d}} \cdot e^{-\frac{1}{2}\cdot(\frac{\eta\sqrt{\frac{4\Delta^2 + 2d}{2d}} - (d + \Delta^2)}{\sqrt{4\Delta^2 + 2d}})^2}d\eta
    \\ = \int_{\frac{y\cdot2d + d\sqrt{2d}}{\sqrt{4\Delta^2+2d}}}^{\infty} \frac{1}{\sqrt{2\pi\cdot2d}} \cdot e^{-\frac{1}{2}\cdot(\frac{\eta}{\sqrt{2d}} - \frac{(d + \Delta^2)}{{\sqrt{4\Delta^2 + 2d}}})^2}d\eta
\end{gather*}
We make another substitution $\eta'=\sqrt{2d}({\frac{\eta}{\sqrt{2d}}-\frac{d+\Delta^2}{\sqrt{4\Delta^2+2d}}+\frac{d}{\sqrt{2d}}})$ and get:
\begin{gather*}
    \int_{\frac{y\cdot2d + d\sqrt{2d}}{\sqrt{4\Delta^2+2d}}-\frac{\sqrt{2d}(d+\Delta^2)}{\sqrt{4\Delta^2+2d}}+d}^{\infty} \frac{1}{\sqrt{2\pi\cdot2d}} \cdot e^{-\frac{1}{2}\cdot(\frac{\eta'-d}{\sqrt{2d}})^{2}} d\eta' 
    \\ = \int_{\frac{y\cdot2d - \Delta^2\sqrt{2d}}{\sqrt{4\Delta^2+2d}}+d}^{\infty} \frac{1}{\sqrt{2\pi\cdot2d}} \cdot e^{-\frac{1}{2}\cdot(\frac{\eta'-d}{\sqrt{2d}})^{2}} d\eta' 
    \\ = \int_{\frac{y\cdot2d - \Delta^2\sqrt{2d}}{\sqrt{4\Delta^2+2d}}+d}^{y\sqrt{2d}+d} \frac{1}{\sqrt{2\pi\cdot2d}} \cdot e^{-\frac{1}{2}\cdot(\frac{\eta'-d}{\sqrt{2d}})^{2}} d\eta' + \int_{y\sqrt{2d}+d}^{\infty} \frac{1}{\sqrt{2\pi\cdot2d}} \cdot e^{-\frac{1}{2}\cdot(\frac{\eta'-d}{\sqrt{2d}})^{2}} d\eta' 
    \\ = \int_{\frac{y\cdot2d - \Delta^2\sqrt{2d}}{\sqrt{4\Delta^2+2d}}+d}^{y\sqrt{2d}+d} \frac{1}{\sqrt{2\pi\cdot2d}} \cdot e^{-\frac{1}{2}\cdot(\frac{\eta'-d}{\sqrt{2d}})^{2}} d\eta' + FPR(t_d) 
    \end{gather*}
    
Where the last transition follows from identifying the second part of the integral with the false positive rate we used in the proof of Lemma.~1. This rate is independent of $d$ in the limit.

We now bound the first integral by taking the maximum of the integral: 
    
    \begin{gather*}
    \leq (y\cdot\sqrt{2d} - \frac{y\cdot2d - \Delta^2\sqrt{2d}}{\sqrt{4\Delta^2+2d}}) \frac{1}{\sqrt{2\pi\cdot2d}} \cdot e^{-\frac{1}{2}\cdot(\frac{\frac{y\cdot2d - \Delta^2\sqrt{2d}}{\sqrt{4\Delta^2+2d}}}{\sqrt{2d}})^{2}} + FPR(t_d)
    \\ = (y\cdot\sqrt{2d} - \frac{y\cdot2d - \Delta^2\sqrt{2d}}{\sqrt{4\Delta^2+2d}}) \frac{1}{\sqrt{2\pi\cdot2d}} \cdot e^{-\frac{1}{2}\cdot(\frac{y\cdot\sqrt{2d} - \Delta^2}{\sqrt{4\Delta^2+2d}})^{2}} + FPR(t_d) 
\end{gather*}
Now, we note that for the exponent:
\begin{gather*}
    \lim_{d\to\infty} \frac{y\cdot\sqrt{2d} - \Delta^2}{\sqrt{4\Delta^2+2d}} = y \in \mathbb{R}
\end{gather*}
Moreover, we observe that:
\begin{gather*}
 \lim_{d\to\infty} (y\cdot\sqrt{2d} - \frac{y\cdot2d - \Delta^2\sqrt{2d}} {\sqrt{4\Delta^2+2d}}) 
 \\=  \lim_{d\to\infty} y(\sqrt{2d} - \frac{2d} {\sqrt{4\Delta^2+2d}}) + \lim_{d\to\infty}\frac{\Delta^2\sqrt{2d}}{\sqrt{4\Delta^2+2d}} = 0 + \Delta^2 = \Delta^2
\end{gather*}
Where, we used the fact that $\lim_{d\to\infty} y(\sqrt{2d} - \frac{2d} {\sqrt{4\Delta^2+2d}})=0$ and that $\lim_{d\to\infty}\frac{\Delta^2\sqrt{2d}}{\sqrt{4\Delta^2+2d}}=\Delta^2$, and therefore the limits laws applies. As a result, we get by the limits laws that:
\begin{gather*}
    \lim_{d\to\infty} (y\cdot\sqrt{2d} - \frac{y\cdot2d - \Delta^2\sqrt{2d}}{\sqrt{4\Delta^2+2d}}) \frac{1}{\sqrt{2\pi\cdot2d}} \cdot e^{-\frac{1}{2}\cdot(\frac{y\cdot\sqrt{2d} - \Delta^2}{\sqrt{4\Delta^2+2d}})^{2}}
    \\ = \lim_{d\to\infty} (y\cdot\sqrt{2d} - \frac{y\cdot2d - \Delta^2\sqrt{2d}}{\sqrt{4\Delta^2+2d}}) \cdot \lim_{d\to\infty} \frac{1}{\sqrt{2\pi\cdot2d}} \cdot \lim_{d\to\infty} \cdot e^{-\frac{1}{2}\cdot(\frac{y\cdot\sqrt{2d} - \Delta^2}{\sqrt{4\Delta^2+2d}})^{2}} 
    \\ = \Delta^2 \cdot 0 \cdot e^{-\frac{1}{2}\cdot y^{2}} = 0
\end{gather*}
where the limit that goes to $0$ is doing so at a rate of $\frac{1}{\sqrt{d}}$.
This ends the proof.
\end{proof}

\section{Experimental details \& More evaluations}
\label{app:exp}

\subsection{Dataset Descriptions}
In our analysis, we used three distinct datasets to evaluate our anomaly detection models. The first dataset, \textit{Cars3D} \cite{cars3d}, comprised a collection of 17,568 3D car models, showcasing variations in elevation, rotation (azimuth), and car model (object). The second dataset, \textit{RaFD} \cite{rafd}, featured 8,040 facial images capturing different individuals expressing a wide range of emotions. Each image in the RaFD dataset was annotated with emotional attributes such as happiness, anger, and sadness, along with person ID and head pose information. Lastly, we used the \textit{CelebA} dataset \cite{celeba}, which contained images of celebrity faces annotated with various attributes. For the CelebA dataset, we focused on hair color, hair style, and beard style attributes during our evaluation. These attributes were introduced by merging existing attributes in the dataset. The hair color attribute included \{\textit{Black\_Hair, Blond\_Hair, Brown\_Hair, Gray\_Hair\}} values, the hair style attribute included \{\textit{Bald, Bangs, Receding\_Hairline, Straight\_Hair, Wavy\_Hair\}} values, and the beard style attribute included \{\textit{Goatee, Mustache, No\_Beard\}} values. We utilized the CelebA dataset test set, which consisted of 39,829 images. This test set was subsequently split into a new train and test sets for our analyses. \cref{tab:attribute_splits} provides a summary of the attributes of interest for each dataset.

\subsection{Baselines}
In our comprehensive evaluation, we investigated various established baseline methods for anomaly detection, including both self-supervised and pretrained feature adaptation approaches. Furthermore, we incorporated an out-of-distribution (OOD) method \cite{ood_balaji} that utilizes guidance on the relevant attribute. The baselines we considered encompassed a diverse set of methods. Specifically, we evaluated DN2, which utilized unadapted ImageNet pretrained ResNet features, as well as top-performing pretrained representation adaptation methods such as PANDA \cite{panda_}, MSAD \cite{mean_shifted}, and pretrained DINO \cite{dino, reiss_workshop}. Additionally, we included CSI \cite{csi}, a leading self-supervised method that learns representations from scratch. By considering this comprehensive set of baselines, we aimed to provide a thorough analysis of different approaches for anomaly detection, spanning both self-supervised and pretrained feature adaptation paradigms.

\subsection{Implementation Details}
\textbf{Dataset splits.} For each dataset and relevant attribute, we employed an 85:15 train-test split. To evaluate our unguided anomaly detection approaches, we excluded anomalous samples from the train set. Instead, we focused solely on normal samples related to the specific attribute under investigation. Default hyperparameters were used for all methods, as specified in their official implementations.

\textbf{$k$NN density estimation.} For density estimation, we utilized the faiss \cite{faiss} library's implementation of the k-nearest neighbors ($k$NN) algorithm. To balance accuracy and computational efficiency, we set $k=10$.

\textbf{Linear probing.} To assess over-expressivity, we employed a linear probing approach. To evaluate linear probing, we augmented the training set by introducing anomalous samples to it. Specifically, we used the complete train set from the original split, incorporating its anomalous instances with attribute-specific modifications.

We trained a linear classifier to distinguish between normal and anomalous instances based on the \textit{representations} generated by the baselines. The training employed a stochastic gradient descent (SGD) optimizer with a binary cross-entropy loss function. The training lasted for 100 epochs, with an initial learning rate of 0.01 and a momentum factor of 0.9. During testing, the trained linear classifier was applied to compute logits which served as anomaly scores. Higher logits indicate a higher likelihood of an instance being anomalous. Importantly, the same test sets were used for both the $k$NN density estimator and linear probing anomaly scoring, ensuring a fair comparison between the two approaches.

\textbf{Out-of-distbution (OOD) detection.} We implemented an OOD method for incorporating guidance on the relevant attribute. Our approach was based on the state-of-the-art OOD method \cite{ood_balaji}, which focuses on fine-tuning a pretrained backbone using a multi-class classification (cross-entropy) objective by incorporating guidance through attribute annotations. During the training phase, we included attribute values as part of the input data to guide the model's learning process. By associating attribute values with normal instances, the model learns normal data patterns and characteristics. It is important to note that the model was trained exclusively on normal data, like the other baselines.

In each experimental setting, we evaluated the performance of the anomaly detection models across the first three values of each attribute within the given dataset.

\begin{table}
  \caption{Attribute splits for our benchmarks.}
  \label{tab:attribute_splits}
  \centering
  \begin{tabular}{ll}
    \toprule
    Dataset & Attributes of Interest\\
    \midrule
    Cars3D & elevation, azimuth, object \\
    RaFD & angle, identity, expression \\
    CelebA & hair color, hair style, beard style \\
    \bottomrule
  \end{tabular}
\end{table}

\subsection{OOD Detection with Irrelevant Attributes}
Our evaluation provides empirical evidence to demonstrate the impact of selecting an incorrect attribute for guidance in out-of-distribution (OOD) detection. To examine this, we conducted experiments and presented the results in \cref{tab:ood}. The table clearly illustrates that relying on an irrelevant attribute for guidance in OOD detection results in a noticeable decrease in accuracy. Specifically, in our experiment, we assigned the sequential attribute as the attribute for the irrelevant attribute OOD baseline (according to the order in \cref{tab:attribute_splits}). These findings emphasize the importance of selecting the appropriate attribute for effective OOD detection and highlight the "no free lunch" principle in anomaly detection representations.

\subsection{Training Resources}
Training each dataset presented in this paper takes approximately 3 hours on a single NVIDIA RTX-2080 TI.

\subsection{Licences}

\textbf{Code.} PANDA \cite{panda_} and MSAD \cite{mean_shifted} are licensed under a SOFTWARE RESEARCH LICENSE as described here\footnote{https://github.com/talreiss/Mean-Shifted-Anomaly-Detection/blob/main/LICENSE}. PyTorch uses a BSD-style license, as detailed in the license file\footnote{https://github.com/pytorch/pytorch/blob/master/LICENSE}, while faiss \cite{faiss} creates MIT licenses.

\textbf{Metrics.} For the ROC-AUC metric we use \textit{roc\_auc\_score} function from scikit-learn library.

\end{document}